\documentclass{article}

\usepackage[preprint]{neurips_2026}
\usepackage{thmtools}
\usepackage{wrapfig}

\usepackage[utf8]{inputenc} 
\usepackage[T1]{fontenc}    
\usepackage{hyperref}       
\usepackage{url}            
\usepackage{booktabs}       
\usepackage{amsfonts}       
\usepackage{nicefrac}       
\usepackage{microtype}      
\usepackage{xcolor}         

\usepackage{microtype}
\usepackage{graphicx}
\usepackage{subcaption}
\usepackage{booktabs} 

\usepackage{amsmath}
\usepackage{amssymb}
\usepackage{mathtools}
\usepackage{amsthm}
\usepackage{multirow}

\usepackage[capitalize,noabbrev]{cleveref}

\theoremstyle{plain}
\newtheorem{theorem}{Theorem}[section]

\theoremstyle{definition}
\newtheorem{definition}[theorem]{Definition}

\theoremstyle{remark}

\newcommand{\indep}{\perp \!\!\! \perp}
\newcommand{\dep}{\not\!\perp\!\!\!\perp}

\usepackage[textsize=tiny]{todonotes}

\title{On the Recoverability of Causal Relations from Bulk Gene Expression Data}

\author{Gongxu Luo$^1$,~~Boyang Sun$^{1}$,~~Kun Zhang$^{1,2}$\\
$^1$ Mohamed bin Zayed University of Artificial Intelligence, 
$^2$ Carnegie Mellon University\\
\{gongxu.luo, kun.zhang\}@mbzuai.ac.ae\\
}

\begin{document}

\maketitle

\begin{abstract}
Bulk gene expression profiling, which aggregates pooled RNA across cells within a biological sample, remains important in the single-cell era because it is typically less noisy, more sensitive, and more cost-effective than single-cell assays. Accordingly, a growing body of computational methods seeks to recover causal relations among genes from bulk expression data. However, aggregation is a lossy, non-invertible coarsening of the underlying cellular system, and it remains unclear whether and under what conditions causal relations are recoverable from aggregated bulk gene expression data. To answer this, we formalize recoverability under aggregation through two notions of consistency: functional-form consistency and conditional-independence consistency. We then derive necessary and sufficient conditions for recoverability, showing that these properties are preserved only under linear aggregations (e.g., sum/mean) coupled with affine structural equations. To assess the practical plausibility of these conditions, analyses of four bulk and four single-cell gene expression datasets further reveal that the estimated pairwise regulatory functions among genes deviate from linearity in both data types, providing limited empirical support for the linearity assumptions required for recoverability. Together, these results caution against recovering causal relations from aggregated bulk expression data without strong additional assumptions.
\end{abstract}

\section{Introduction}
Understanding causal relations among genes from expression data is a central goal in computational biology~\cite{davidson2008properties}. Despite the growing availability of single-cell assays, bulk gene expression profiling remains prevalent in large-scale studies, yielding measurements aggregated over cell populations~\cite{wilks2021recount3,zhang2025perturbatlas}. For example, bulk RNA-seq quantifies gene expression from RNA pooled across many cells in a tissue sample or cultured population, effectively measuring the summed transcript abundance over the mixture, as shown in the blood example in Figure~\ref{fig:exp}(b).

Over the past decades, the continued prevalence of bulk gene expression data in biological research has motivated a growing body of computational approaches, including information theory models~\cite{margolin2006aracne,zhang2012inferring}, Bayesian networks~\cite{zou2005new,liu2016inference}, and neural networks~\cite{wu2024spred,zhu2024hybrid}, which aim to discover linear or nonlinear causal relations among genes from bulk expression profiles~\cite{delgado2019computational}. However, most methods typically treat aggregated bulk measurements as direct observations of a single homogeneous causal system. As a result, algorithmic output may correspond only to the causal model implicit in the aggregated distribution.\looseness=-1

An overlooked central challenge is that aggregation can distort the key identifying information for causal discovery, including functional form, statistical dependence, and conditional independence (CI), among the aggregated variables in ways that differ from those implied by the original causal model. For instance, in the original causal graph among three genes $X$, $Y$, $Z$ in Figure~\ref{fig:exp}(a), $X \indep Z|Y$. After aggregation over $m$ random samples, however, we observe only the aggregated counterparts $\bar{X}, \bar{Y}, \bar{Z}$, for which the CI relation is violated, specifically, $\bar{X} \dep \bar{Z}|\bar{Y}$, as shown in (c). This challenge raises a fundamental recoverability question: when do causal relations inferred from bulk expression remain consistent with the underlying causal structure, and under what conditions can this recoverability be guaranteed? Resolving this question is crucial for interpreting causal claims from aggregated bulk cohorts and for developing computational pipelines with principled correctness guarantees.\looseness=-1

\begin{figure*}
    \centering
    \includegraphics[width=0.9\textwidth]{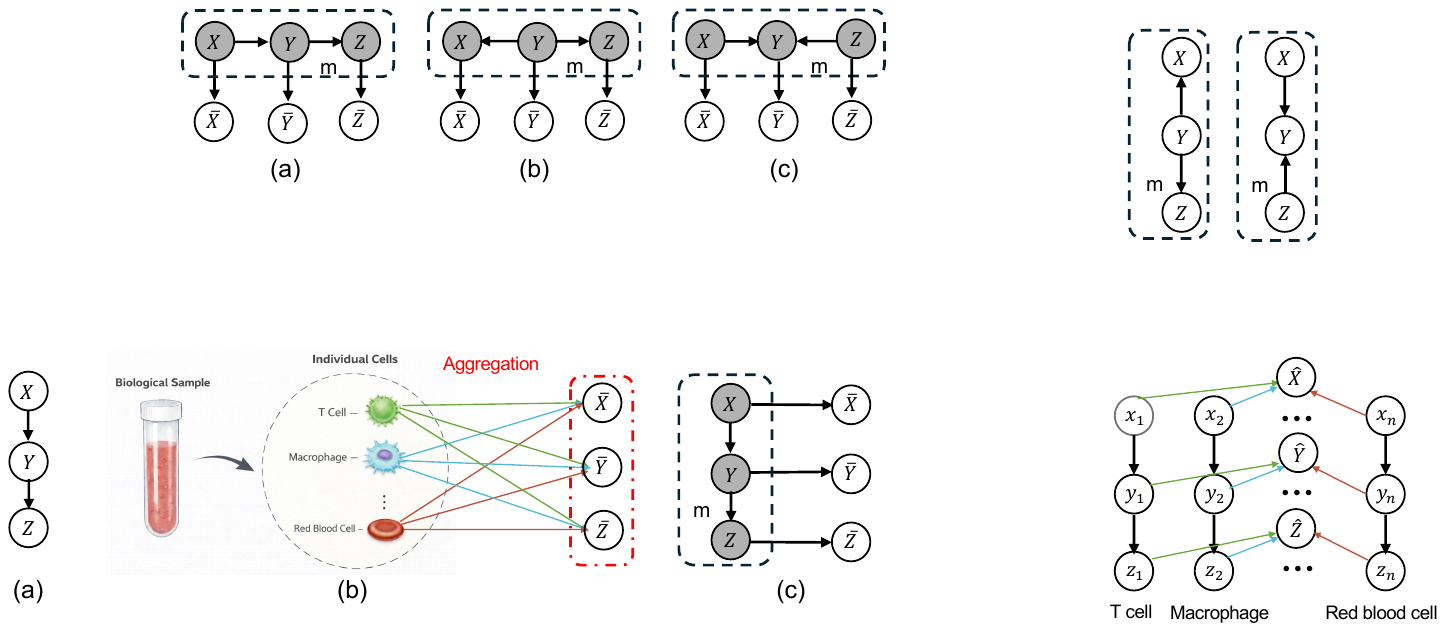}
    \caption{Illustration of aggregated bulk gene expression data. (a) The causal relationships among genes $X$, $Y$, and $Z$. (b) The aggregation process that generates bulk gene expression data. (c) The causal graph of hidden genes and observed aggregated counterparts $\bar{X}, \bar{Y}, \bar{Z}$ over $m$ random sampling.\looseness=-1}
    \label{fig:exp}
     \vspace{-0.5cm}
\end{figure*}

To address this, we develop a framework that makes the recoverability of causal relations from aggregated bulk expression precise by formalizing the relation between the underlying causal model and the causal structure implied by aggregated measurements. We introduce two complementary consistency notions under aggregation: functional-form consistency, which asks whether the aggregated variables admit a structural causal model (SCM) of the same functional form as the underlying model in a way that preserves the intended causal relations, and conditional-independence consistency, which asks whether the conditional (in)dependence in aggregated measurements is consistent with the underlying causal model. Within this framework, we derive necessary and sufficient conditions for recoverability and show that they are preserved only under linear aggregations (e.g., sum/mean) coupled with affine structural equations. Finally, we translate these theoretical requirements into statistical diagnostics of functional linearity, based on the inverse effective degrees of freedom (edf) from generalized additive models and the deviation of the estimated pairwise functions from their best affine approximations, and apply them to real bulk and single-cell gene expression datasets. The empirical results show that functional linearity between genes is generally weak in both settings, suggesting that the conditions required for recoverability are unlikely to hold in practice and that causal relations cannot be reliably recovered from bulk expression data without additional assumptions.\looseness=-1

\textbf{Contributions.}\label{contribution}
This paper addresses a fundamental but underexplored question in gene regulatory network inference: whether causal relations can be recovered from aggregated bulk gene expression data. We provide a theoretical characterization of this recoverability in terms of functional-form and conditional-independence consistency, and prove that it is guaranteed only under linear aggregation and affine structural equations. Synthetic and real-data analyses further suggest that this regime is rarely met in practice, implying that causal relations are generally not reliably recoverable from bulk expression alone without additional assumptions.

\section{Related Work}
Recovering causal relations from bulk gene expression data is a typical causal discovery problem; causal discovery paradigms, including constraint-based, score-based, and function-based methods~\cite{glymour2019review}, are utilized to identify the causal relations among genes~\cite{delgado2019computational}. 

\noindent\textbf{Constraint-based causal discovery methods} rely on the principle that conditional independence implies the absence of a direct causal relation. These methods, represented by the PC algorithm, typically begin with a fully connected graph and iteratively remove edges when pairs of variables are found to be conditionally independent. Causal discovery methods have been widely applied to bulk gene expression data, with adaptations to address key challenges in this domain. For high-dimensional settings, parallelized implementations \cite{le2016fast} and stochastic complexity \cite{ji2024construction} have been developed to improve scalability. To better capture nonlinear dependencies, conditional mutual information–based independence tests have been incorporated \cite{ zhang2012inferring,liu2016inference}. When time-course measurements are available, temporal ordering constraints have been leveraged to aid causal edge orientation \cite{acerbi2014gene,ajmal2020inferring}. Moreover, biological information, such as flexible features \cite{saito2009co} and the Mendelian randomization principle \cite{badsha2021mrpc}, is involved in identifying co-expression patterns and causal relations.

\noindent\textbf{Score-based causal discovery methods} formulate structure learning as an optimization problem, searching over candidate graphs and selecting the best one according to its fit to the observed data, typically with a penalty on model complexity. Gene expression analysis is formulated as Bayesian network structure learning, using Bayesian score search with the sparse candidate strategy to identify high-scoring regulatory networks~\cite{friedman2000using}. To mitigate the issue of model complexity in high-dimensional settings, mutual information, breakpoint detection  \cite{xing2017improved}, genetic node ordering \cite{wang2019high}, and skeleton constraint \cite{zhu2024hybrid} are applied to the representative score-based method GES to restrict the search space.

\noindent\textbf{Functional-based causal discovery methods} assume a functional causal model and infer causal relations by exploiting asymmetries in the data-generating process or properties of the noise terms. Under the linear assumption, the structural equation model (SEM) with penalized likelihood is applied to extract a sparse causal structure~\cite{liu2008gene}. Moreover, genetic perturbation and sparsity-aware maximum likelihood are designed to identify causal direction and improve inference efficiency~\cite{cai2013inference}. Under the assumption of an additive noise model (ANM), the asymmetry\cite{jiao2018bivariate} and genetic perturbation with SEM~\cite{li2021differential} are utilized to identify causal relations from bulk gene expression data.

Although causal discovery methods are widely applied to bulk gene expression data, existing methods treat bulk expression profiles as direct observations of the variables in a single underlying causal model, implicitly assuming that aggregation preserves causal semantics. However, bulk assays arise from a lossy, non-invertible aggregation over cell populations, which can violate this assumption and induce a mismatch between bulk-level and underlying causal relations. Our work questions this assumption and asks when such a treatment is theoretically justified under cellular aggregation.

\section{The Aggregated Nature of Bulk Expression After Standard Normalization}
\label{sec:3}
In this section, we describe how bulk gene expression data are profiled and emphasize their inherently aggregated nature. Importantly, although normalization procedures can mitigate technology-specific biases introduced by different profiling platforms, the resulting bulk expression measurements remain aggregated signals over the cells contained in each biological sample as shown in Figure~\ref{fig:normalization}.

\begin{figure*}[h]
\centering
    \includegraphics[width=1\textwidth]{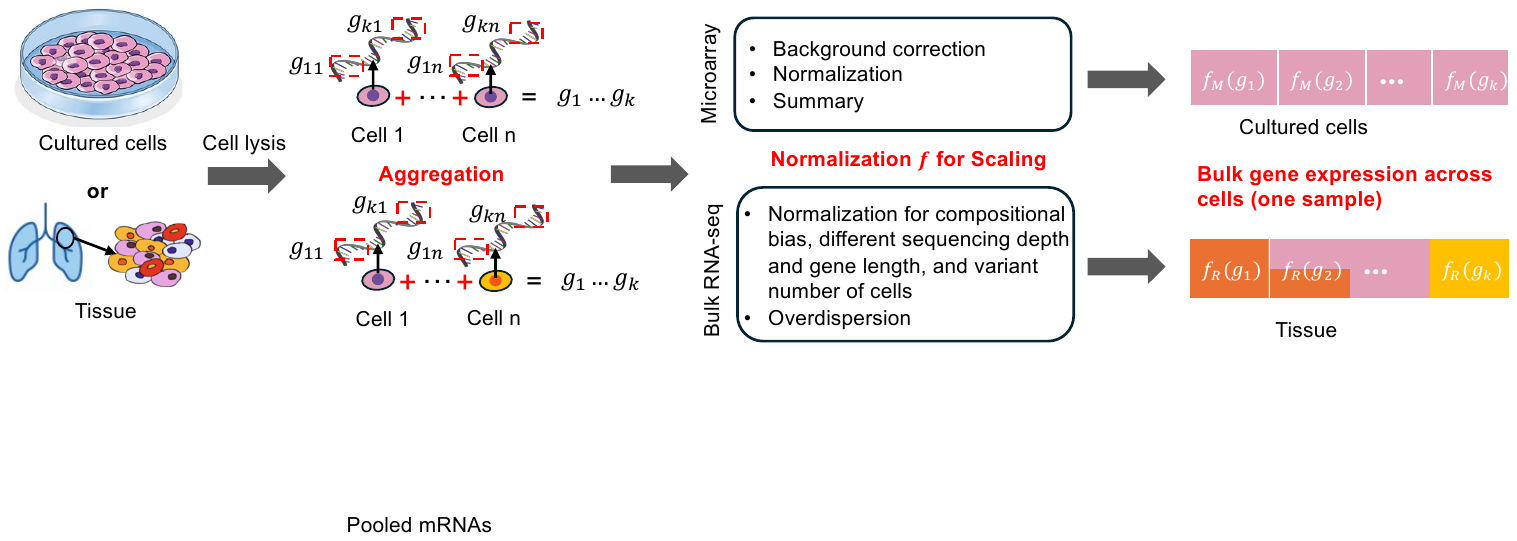}
    \caption{Illustration of the aggregated nature of bulk expression after standard normalization.}
    \label{fig:normalization}
\end{figure*}
\subsection{Microarray and normalization}

Microarray is a tissue-level gene expression profiling technology that measures aggregate gene expression across cells. Total RNA is extracted from a bulk biological sample, so transcripts from all constituent cells are pooled before measurement. The extracted RNA is converted into labeled cDNA or cRNA and hybridized to probes on a microarray chip. The raw expression data are then obtained from probe-specific fluorescence intensities~\cite{duggan1999expression,mccall2012affymetrix}, which reflect the summed abundance of the corresponding transcripts across cells, as illustrated in the first two steps of Figure~\ref{fig:normalization}.\looseness=-1

In microarray profiling, systematic technical variation, including global intensity bias, background noise, dye bias in two-color platforms, etc., can scale and bias the measurement of probe intensities~\cite{quackenbush2002microarray, do2006normalization}. The standard preprocessing procedure, consisting of background correction, normalization, and summarization, is designed to produce comparable gene expression estimates by transforming the aggregated (summed) measurements, preserving the aggregated nature.

Specifically, with a single-channel array platform as an example, given the specific hybridization signals $S_{ij}$ for probe $i$ in the $j$-th sample, where $i=0,\dots,m, j = 0,\dots,n$, measured fluorescence intensities $D^{mi}_{ij}= S_{ij}+N_{ij}$, where $N_{ij}$ indicates the background noise. Background correction estimates the $S_{ij}$ from $D^{mi}_{ij}$. Then, to tackle the systematic technical variation, quantile normalization is designed on $S_{ij}$ as follows: for each sample $j$, rank the hybridization signals for all probes $S_{(1)j}\leq S_{(2)j} \leq \dots\leq S_{(m)j}$,  where the parenthesized subscript (r) indexes the rank after sorting. The averaged intensity at each rank is then computed across samples $\bar{S}_{(r)} = \frac{1}{n}\sum_{j=1}^n S_{(r)j}$, where rank $r=1,\dots,m$. Then, the normalized measurement is $\tilde{S}_{ij}= \bar{S}_{(r_{ij})}$, where $r_{ij}$ denotes the rank of $S_{ij}$ within sample $j$. Thus, all samples are normalized to a common distribution and are therefore comparable. Because each transcript is measured by multiple probes, probe-level intensities are summarized into a gene-level expression value. For gene \(k\) with probe set \(P_k\), the normalized log-intensity of probe \(i\in P_k\) in sample \(j\) can be modeled as
\(
\log_2(\tilde S_{ij}^k)=\theta_j^k+\psi_i+\sigma_{ij},
\)
where \(\theta_j^k\) denotes the underlying expression level of gene \(k\) in sample \(j\), \(\psi_i\) captures the probe-specific affinity effect, and \(\sigma_{ij}\) is residual noise. The gene-level expression estimate is then obtained as \(\tilde\theta_j^k\).

Apart from shifting such as background correction, the core transformation, quantile normalization is denoted by $f_M$. The normalized expression value of gene $K$ is as follows:
\begin{equation}\label{eq:microarray}
    \tilde{\theta}_{j}^k = f_M(\sum_{i\in P_K} D^{mi}_{ij}) = f_M(\sum_{i\in P_K}\sum_{c=1}^{n}\theta_{kc}^{ij}),
\end{equation}
where $\theta_{kc}^{ij}$ indicates the expression value of gene $k$ of the $c$-th cell measured by prob $i$ in the $j$-th sampling, $n$ denotes the number of cells for one sampling. $f_M$ does not change the aggregated nature.\looseness=-1

\subsection{Bulk RNA-seq and normalization}
Bulk RNA-seq is a sequencing-based transcriptomic assay that measures pooled mRNA from all cells in a biological sample. The extracted RNA is prepared into a sequencing library through mRNA enrichment or rRNA depletion, fragmentation, reverse transcription, adapter ligation, and amplification. The library is then sequenced, and the resulting reads are mapped to a reference genome or transcriptome to quantify gene or transcript abundance~\cite{mortazavi2008mapping,wang2009rna}, reflecting aggregated expression signals.\looseness=-1

Aggregated bulk RNA-seq measurements are subject to sample-specific scaling differences arising from compositional bias~\cite{robinson2010scaling}, variation in sequencing depth and gene length~\cite{mortazavi2008mapping}, and differences in cell number~\cite{loven2012revisiting}. Standard preprocessing typically addresses these effects by estimating sample-specific normalization factors that place counts on a common scale, without altering the aggregated nature of bulk RNA-seq data as shown in Figure~\ref{fig:normalization}.

Taking DESeq2~\cite{love2014moderated} as an example, for gene $k$ in sample $j$, let $D^{se}_{kj}$ denote the observed read count. To account for sample-specific scaling effects, DESeq2 first estimates a size factor for each sample. Specifically, for each gene $k$, the geometric mean across the $m$ samples is defined as $G_k=\left(\prod_{j=1}^m D^{se}_{kj}\right)^{1/m},$ where $m$ is the total number of samples. The size factor for sample $j$ is then estimated as
$s_j=\operatorname{median}_k\left(\frac{D^{se}_{kj}}{G_k}\right).$ The corresponding normalized count is $\tilde{D}_{kj}=\frac{D^{se}_{kj}}{s_j}.$ To account for overdispersion, DESeq2 models the raw counts using a negative binomial distribution,
$D^{se}_{kj}\sim \mathrm{NB}(\mu_{kj},\gamma_k),$ where $\gamma_k$ is the gene-specific dispersion parameter and $\mu_{kj}=s_j q_{kj}.$ Unlike a simple group-wise mean model, DESeq2 parameterizes the normalized mean $q_{kj}$ through a generalized linear model, $\log q_{kj}=\sum_{r=1}^p z_{jr}\varsigma_{kr},$ where $z_{jr}$ is the $r$-th covariate of sample $j$, $\varsigma_{kr}$ is the corresponding regression coefficient for gene $k$, and $p$ is the number of covariates in the design matrix. Equivalently,
$\mu_{kj}=s_j \exp\left(\sum_{r=1}^p z_{jr}\varsigma_{kr}\right).$
Under this model, the variance of $D^{se}_{kj}$ is
$\operatorname{Var}(D^{se}_{kj})=\mu_{kj}+\gamma_k\mu_{kj}^2.$\looseness=-1

DESeq2 first obtains a gene-wise estimate of $\gamma_k$, then fits a mean--dispersion trend across genes, and finally shrinks the gene-wise estimate toward the fitted trend. The regression coefficients $\varsigma_{kr}$ are estimated within the negative binomial generalized linear model, and the fitted normalized expression is given by $o_j^k=\log q_{kj}.$ Thus, after normalization and model fitting, the gene expression value is represented by $\tilde{o}_j^k$. Denote the standard normalization procedure for bulk RNA-seq data by $f_R$, the normalized gene expression value is as follows:
\begin{equation}\label{eq:rna-seq}
    \tilde{o}_j^k = f_M(D^{se}_{kj}) = f_M(\sum_{c=1}^{n}o_{kc}^j),
\end{equation}
where $o_{kc}^j$ indicates the expression value of gene $k$ in the $c$-th cell of the $j$-th sampling. 
\looseness=-1

For bulk gene expression data, including both microarray and bulk RNA-seq, existing normalization methods are primarily designed to correct sample-specific scaling effects and platform-specific measurement artifacts of the aggregated data. However, as discussed in Eq.\ref{eq:microarray} and Eq.\ref{eq:rna-seq}, these procedures do not alter the fundamental aggregation structure of bulk measurements.

\section{Recoverability of causal relation}
\label{sec:4}
Building on the aggregated nature of bulk gene expression data established above, we next study the recoverability of causal relations from bulk measurements. Specifically, we derive the necessary and sufficient conditions under which functional-form consistency and conditional-independence consistency are preserved under aggregation. Our analysis shows that such consistency can be guaranteed only for affine causal relations under linear aggregations.

\subsection{Problem formulation}

Let \( \mathbf{V} \in \mathbb{R}^d \) denote a latent random vector governed by a causal model with distribution \( P_\mathbf{V} \).
In our setting, the individual components of $\mathbf{V}$ are not directly observable; instead, we can only observe their aggregated counterparts. For any latent variable $X \in \mathbf{V}$, we define its observed aggregate realization \( \bar{x} \) as the result of a fixed component-wise mapping
$g : \mathbb{R}^m \to \mathbb{R}$
(e.g., sum, mean, median) applied to a collection of $m$ independent and identically distributed (i.i.d.) micro-samples.
Specifically, the $i$-th observed realization of its is given by
\begin{equation} \label{problem_dg}
\bar{x}_i = g\left( \{{x_j}\}_{j \in K_i} \right)
\end{equation}
where $K_i$ is a randomly selected index set of size $m$ such that $|K_i|=m$, and $K_i \cap K_{i'} = \emptyset$, for $i \not= i'$. We use $K$ to denote an index set of an arbitrary $m$ sampling from $P_\mathbf{V}$

Given the observed aggregated data, the problem is to determine whether causal relations over \(V\) are recoverable after aggregation. We study this recoverability problem through consistency in directed functional relations and conditional independence under aggregation. 

\subsection{Functional-form consistency under aggregation}

Functional-form consistency concerns whether a causal relation remains in the same functional family after aggregation. Suppose that \(X\) causes \(Y\), and the causal relation is given by the general functional form
\(
Y=f(X,\varepsilon_Y),
\)
where \(\varepsilon_Y\) denotes the exogenous noise variable associated with \(Y\). Then, the functional-form consistency is defined as follows: 
\begin{definition}[Functional-form consistency under aggregation]
Suppose the structural equation is
\(
Y=f(X,\varepsilon_Y).
\)
Consider an arbitrary index set \(K=\{k_1,\dots,k_m\}\), define the aggregated results of $g$ over $K$ by
\(
\bar{x}=g\bigl((x_i)_{i\in K}\bigr), \bar{y}=g\bigl((y_i)_{i\in K}\bigr), \bar{\varepsilon}_y=g\bigl((\varepsilon_{y_i})_{i\in K}\bigr)
\).
We say that the structural equation \(f\) is \emph{functional-form consistent under aggregation \(g\)} if and only if (iff) there exists a function \(h:\mathbb{R}\times\mathbb{R}\to\mathbb{R}\) such that
\(
\bar Y=h(\bar X,\bar\varepsilon_Y)
\).
\end{definition}

Specifically, we consider two common functional causal model families: the additive noise model (ANM)
\(
Y=f(X)+\varepsilon_Y,
\)
and the post-nonlinear model (PNL)
\(
Y=f_2(f_1(X)+\varepsilon_Y),
\)
where \(\varepsilon_Y\indep X\). Before characterizing when these functional forms are preserved under aggregation, we distinguish between linear and nonlinear aggregations as follows:

\begin{definition}[Linear aggregation]
Let \(g:\mathbb{R}^m\to\mathbb{R}\) be a component-wise mapping. We say that \(g\) is a \emph{linear aggregation} iff, for any \(u,e\in\mathbb{R}^m\) and any \(\alpha,\beta\in\mathbb{R}\),
\(
g(\alpha u+\beta e)=\alpha g(u)+\beta g(e).
\)
\end{definition}

Equivalently, a linear aggregation can be written as a fixed weighted sum of the individual-level inputs:
\(
g((u_{i})_{i \in K})=\sum_{j=1}^m w_j u_{k_i},
\)
where \(w_1,\ldots,w_m\in\mathbb{R}\) do not depend on the input values. Sum aggregation and mean aggregation are special cases. Aggregations that cannot be represented in this form, such as median, maximum, minimum, or other order-based transformations, are referred to as \emph{nonlinear aggregations}. Then, the necessary and sufficient conditions for the recoverability of the functional causal models specified by functional-from consistency are as follows:\looseness=-1

\begin{restatable}[Recoverability of causal relations under aggregation for functional causal models]{theorem}
{THEOREM}\label{recoverability}
Assume that $X \indep \varepsilon_{Y}$, structural equations are measurable, and let \(((v_i)_{i\in K})\) be i.i.d. sampled from $P_V$. Let \(g:\mathbb{R}^m\to\mathbb{R}\) be a linear aggregation, that is,
\(
g(u+e)=g(u)+g(e), \forall u,e\in\mathbb{R}^m.
\)

\textbf{(i) ANM.}
Suppose the structural equation model is
\(
Y=f(X)+\varepsilon_Y
\). Define $F(x_K):=(f(x_i))_{i \in K}, x_K = \{x_i\}_{i \in K}$. Then ANM is functional-form consistent under aggregation if and only if there exists a function \(h:\mathbb{R}\to\mathbb{R}\) such that
\begin{equation}
    g(F(x_K)) = h(g(x_K)).
    \label{eq1}
\end{equation} 

\textbf{(ii) PNL.}
Suppose the structural equation model is
\(
Y=f_2(f_1(X)+\varepsilon_Y)
\), where \(f_2\) is invertible and sufficiently smooth. Define $F_1(x_K)=(f_1(x_i))_{i \in K}$ and $F_2(x_K)=(f_2(x_i))_{i \in K}, x_K = \{x_i\}_{i \in K}$. Then the PNL model is functional-form consistent under aggregation if and only if there exist functions $h_1$ and invertible and sufficiently smooth $h_2$ such that
\begin{equation}
    a\!\left(F_2\!\bigl(F_1(x_K)+ \varepsilon_{y_K} \bigr)\right) = \\ h_2\!\left(h_1(g(x_K)) + g(\varepsilon_{y_K})\right), \varepsilon_{y_K} = (\varepsilon_{Y_i})_{i\in K}.
    \label{eq2}
\end{equation} 
Therefore, the causal relation is recoverable from the aggregated distribution, since the aggregated random variables satisfy
\(
\bar Y=h(\bar X)+\bar\varepsilon_Y
\) for ANM and \(
\bar Y=h_2(h_1(\bar X)+\bar\varepsilon_Y)
\) for PNL separately, thereby preserving the functional form of the corresponding structural equation model.
\end{restatable}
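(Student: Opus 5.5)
The plan is to treat both parts as direct consequences of the additivity of $g$ together with the definition of functional-form consistency. The only subtlety is pinning down what ``functional-form consistent'' must mean for each model family. For the ANM, the natural reading is that the aggregated variables again obey an ANM: $\bar Y = h(\bar X)+\bar\varepsilon_Y$ for some measurable $h$. For the PNL, they should obey $\bar Y = h_2(h_1(\bar X)+\bar\varepsilon_Y)$ with $h_2$ invertible and smooth. Once this is fixed, each equivalence reduces to rewriting $\bar Y$ via linearity and comparing pointwise.

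\textbf{(i) ANM.} For any index set $K$, write $y_K = F(x_K)+\varepsilon_{y_K}$ componentwise. Additivity of $g$ gives
\[
\bar y = g(y_K) = g(F(x_K)) + g(\varepsilon_{y_K}) = g(F(x_K)) + \bar\varepsilon_y .
\]
\emph{Sufficiency.} If Eq.~\eqref{eq1} holds, substitute to get $\bar y = h(g(x_K))+\bar\varepsilon_y = h(\bar x)+\bar\varepsilon_y$. Because the $(x_i,\varepsilon_{y_i})$ are i.i.d.\ and $X\indep\varepsilon_Y$, the vectors $x_K$ and $\varepsilon_{y_K}$ are independent. Hence $\bar X\indep\bar\varepsilon_Y$, and the aggregated model is a genuine ANM.

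\emph{Necessity.} Suppose $\bar Y = h(\bar X)+\bar\varepsilon_Y$ holds as an identity in the realizations. Subtracting the displayed expression gives $g(F(x_K)) = h(g(x_K))$ for (almost) all $x_K$, which is Eq.~\eqref{eq1}. Here one should note that the identity is required for every $K$ and every realization, so it holds pointwise on the support; a.s.\ equality suffices if Eq.~\eqref{eq1} is interpreted $P_V^{\otimes m}$-a.s.

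\textbf{(ii) PNL.} Aggregating the structural equation componentwise gives the identity $\bar y = g\bigl(F_2(F_1(x_K)+\varepsilon_{y_K})\bigr)$. (The left side of Eq.~\eqref{eq2} uses ``$a$'', which I read as the aggregation $g$.) With this reading, Eq.~\eqref{eq2} says exactly that this quantity equals $h_2(h_1(\bar x)+\bar\varepsilon_y)$, using $\bar x = g(x_K)$ and $\bar\varepsilon_y = g(\varepsilon_{y_K})$. Both directions are therefore immediate substitutions. Sufficiency again needs the independence $\bar X\indep\bar\varepsilon_Y$, inherited as in (i). The invertibility and smoothness of $h_2$ are imposed so that the recovered model lies in the PNL class. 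Linearity of $g$ enters through the identity $g(\varepsilon_{y_K})=\bar\varepsilon_y$, which lets the noise appear in aggregated form.

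The final claim of the statement then follows. In both cases the aggregated triple $(\bar X,\bar Y,\bar\varepsilon_Y)$ satisfies a structural equation of the same family with an independent noise, so the causal relation $X\to Y$ is encoded in the aggregated distribution in the same functional form.

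The main obstacle is the necessity direction. It depends on whether consistency is understood as a pointwise identity or only as distributional equality. Under a purely distributional reading, $\bar Y$ could admit some other representation $h(\bar X)+\tilde\varepsilon$ with a different noise $\tilde\varepsilon$, and then Eq.~\eqref{eq1} need not follow. I would resolve this by adopting the definition's pointwise formulation, in which the noise is fixed to be $\bar\varepsilon_Y = g(\varepsilon_{y_K})$. Under that formulation, subtraction yields the functional equation directly.
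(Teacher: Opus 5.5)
Your proposal is correct and takes essentially the same approach as the paper's proof. Both rewrite $\bar Y$ via additivity of $g$ as $g(F(x_K))+g(\varepsilon_{y_K})$ (resp.\ $g\bigl(F_2(F_1(x_K)+\varepsilon_{y_K})\bigr)$, reading the paper's ``$a$'' as $g$ exactly as you do) and obtain each direction by direct substitution; your explicit check of $\bar X\indep\bar\varepsilon_Y$ in the ANM case and your caveat that necessity rests on the pointwise (fixed-noise $\bar\varepsilon_Y=g(\varepsilon_{y_K})$) reading of consistency are points the paper leaves implicit but does not contradict.
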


Theorem~\ref{recoverability} gives conditions for when the functional form of an causal model is preserved under aggregation. In the ANM case, since $g$ is linear, for any index set $K$,
\(
    \bar Y
    = g(y_K)
    = g(F(x_K)+\varepsilon_{y_K})
    = g(F(x_K)) + g(\varepsilon_{y_K}).
\)
Moreover, since $X_K \indep \varepsilon_{y_K}$, the aggregated noise term  $\bar\varepsilon_Y=g(\varepsilon_{y_K})$ remains independent of the aggregated cause $\bar X=g(x_K)$. Therefore, the aggregated variables still follow an ANM if and only if 
$g(F(x_K))$ can be written only as a function of $g(x_K)$, that is,
\(
    g(F(x_K)) = h(g(x_K)),
\)
which gives Eq.~\ref{eq1}. The PNL case follows the same idea: aggregation preserves the PNL form exactly when the aggregated variables can still be written as
\(
    \bar Y = h_2\!\left(h_1(\bar X)+\bar\varepsilon_Y\right),
\)
which leads to Eq.~\ref{eq2}. Hence, when these conditions hold, the functional form of the causal model is consistent after aggregation, and the causal relation can be recovered from the aggregated distribution. The detailed proof is provided in Appendix~\ref{proof:recoverability}.

With this theoretical foundation, a natural question is then: what kind of structural equations $f$ can satisfy these conditions? In the following, we discuss the conditions of structural equations that meet the proposed theorem for ANM and PNL separately. 

\begin{restatable}{corollary}{coroanm}
\label{corollary-ANM}
For ANM, suppose \(g\) is a non-degenerate linear aggregation. If functional-form consistency is preserved under aggregation, then \(f\) must be affine, i.e., \(f(X)=\alpha X+\beta\).
\end{restatable}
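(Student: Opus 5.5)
By Theorem~\ref{recoverability}(i), functional-form consistency for the ANM under the linear aggregation $g(u)=\sum_{j=1}^m w_j u_j$ is equivalent to the existence of $h:\mathbb{R}\to\mathbb{R}$ with
\begin{equation*}
\sum_{j=1}^m w_j f(x_j) = h\Bigl(\sum_{j=1}^m w_j x_j\Bigr)\quad\text{for all } x_K=(x_1,\dots,x_m).
\end{equation*}
The plan is to turn this identity into a Cauchy-type functional equation and solve it. We take ``non-degenerate'' to mean that at least two weights are nonzero, say $w_1,w_2\neq 0$. If only one weight were nonzero, $g$ would be a scaled projection and every $f$ would qualify, so some such hypothesis is needed. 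We also take the identity to hold for all $x_K$ in the (product) support; the domain issues this raises are deferred to the last step.

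First we pin down $h$. Fix $x_3=\dots=x_m=0$ and set $x_2=0$. This gives $h(w_1 x_1)=w_1 f(x_1)+c$, where $c=(W-w_1)f(0)$ and $W=\sum_j w_j$. So $h(t)=w_1 f(t/w_1)+c$, and symmetrically $h(t)=w_2 f(t/w_2)+c'$. Substituting back with only $x_1,x_2$ free, and writing $a=w_1x_1$, $b=w_2x_2$, we obtain
\begin{equation*}
h(a+b)+(W-w_1-w_2)f(0)=\bigl(h(a)-c\bigr)+\bigl(h(b)-c'\bigr)\quad\text{for all } a,b.
\end{equation*}
Absorbing constants, $\phi(t):=h(t)-h(0)$ satisfies Cauchy's equation $\phi(a+b)=\phi(a)+\phi(b)$. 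The bookkeeping of constants is routine; plugging $a=b=0$ fixes them consistently.

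Next we pass from Cauchy to linearity. Since $f$ is measurable by the theorem's hypotheses, $h(t)=w_1 f(t/w_1)+c$ is measurable, and hence so is $\phi$. A measurable additive function on $\mathbb{R}$ is linear, $\phi(t)=\lambda t$; this is the classical Banach--Sierpi\'nski/Fr\'echet result, which we will cite. Then $f(x)=\bigl(h(w_1x)-c\bigr)/w_1=\lambda x+\bigl(h(0)-c\bigr)/w_1$, which is affine, i.e.\ $f(X)=\alpha X+\beta$.

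The main obstacle is exactly this last step together with the domain issue. Without a regularity assumption, Cauchy's equation admits pathological solutions, so we must invoke measurability explicitly. Moreover, if $X$ is supported only on an interval rather than all of $\mathbb{R}$, the identity holds only a.e.\ on the support. In that case we would use the local version of Cauchy's equation on an interval, which extends to an affine function there, and conclude that $f$ is affine on the support of $X$ (almost surely). That is the meaningful statement in this setting.
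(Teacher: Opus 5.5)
Your proposal is correct and follows the paper's proof essentially step for step. You zero out all but two coordinates with nonzero weights, read off $h$ on each axis, reduce to Cauchy's equation for $h-h(0)$ (the paper's $H$, since $h(0)=Wf(0)$), and conclude via measurability of $f$. One small slip: the constant $(W-w_1-w_2)f(0)$ in your displayed identity belongs on the right-hand side, not the left. As you note, setting $a=b=0$ determines the constant anyway, so the conclusion is unaffected.
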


This corollary shows that functional-form consistency under linear aggregation is very restrictive. Since a linear aggregation can be written as \(g(x_K)=\sum_{j=1}^m w_j x_{k_j}\), Eq.~\ref{eq1} requires the weighted aggregate of \(f(x_{k_j})\) to depend only on the weighted aggregate of \(x_{k_j}\). For nonlinear \(f\), this is generally impossible, because two different configurations can have the same aggregate \(g(x_K)\) but different values of \(g(F(x_K))\). Affine functions are the only regular functions that avoid this problem: if \(f(x)=\alpha x+\beta\), then \(g(F(x_K))=\alpha g(x_K)+\beta\sum_{j=1}^m w_j\), which is a function of \(g(x_K)\) only. Hence, functional-form consistency under non-degenerate linear aggregation forces \(f\) to be affine. The detailed proof is provided in Appendix~\ref{proof:corollory 4.4}.

\begin{restatable}{corollary}{COROPNL}
\label{corollary-PNL}
For PNL, suppose \(g\) is a non-degenerate linear aggregation. If functional-form consistency is preserved under aggregation, then both \(f_1\) and \(f_2\) must be affine.
\end{restatable}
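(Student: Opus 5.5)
We would start from the characterization in Theorem~\ref{recoverability}(ii): functional-form consistency under a linear aggregation $g(u)=\sum_{j=1}^m w_j u_{k_j}$ (non-degenerate meaning at least two nonzero weights) gives, for all $x_K$ and $\varepsilon_{y_K}$,
\begin{equation*}
\sum_{j=1}^m w_j f_2\bigl(f_1(x_{k_j})+\varepsilon_{y_{k_j}}\bigr) = h_2\Bigl(h_1\bigl(\textstyle\sum_j w_j x_{k_j}\bigr)+\sum_j w_j\varepsilon_{y_{k_j}}\Bigr),
\end{equation*}
with $h_2$ invertible and smooth. Since $f_2$ is invertible, $f_1$ and $f_2$ enter separately, and the plan is to force $f_2$ affine first and then reduce to the ANM argument of Corollary~\ref{corollary-ANM} to make $f_1$ affine.

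\textbf{Step 1: $f_2$ is affine.} Fix $x_K$ and set $a_j=f_1(x_{k_j})$. Set $s=\sum_j w_j\varepsilon_{y_{k_j}}$. The right-hand side depends on the noise vector only through $s$. Take two indices $j\neq l$ with $w_j,w_l\neq 0$, freeze the other noises, and move noise along the direction $(\varepsilon_j,\varepsilon_l)\mapsto(\varepsilon_j+t/w_j,\ \varepsilon_l-t/w_l)$, which keeps $s$ fixed. The left-hand side must then be constant in $t$, i.e.
\begin{equation*}
w_j f_2(a_j+\varepsilon_j+t/w_j)+w_l f_2(a_l+\varepsilon_l-t/w_l)
\end{equation*}
is independent of $t$. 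Differentiating gives $f_2'(u)=f_2'(v)$ for arbitrary $u,v$ in the relevant ranges, since the noises are free. Hence $f_2'$ is constant and $f_2(z)=\gamma z+\delta$. Here $\gamma\neq0$ because $f_2$ is invertible. We use the assumed smoothness here; for merely measurable $f_2$ we would instead invoke the Cauchy-equation argument that measurable solutions of $f_2(u+t)-f_2(u)=c(t)$ are affine.

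\textbf{Step 2: $f_1$ is affine.} With $f_2$ affine, the left-hand side becomes $\gamma\sum_j w_j f_1(x_{k_j})+\gamma s+\delta\sum_j w_j$. The equation now reads $\gamma\,g(F_1(x_K))+\gamma s+c=h_2(h_1(g(x_K))+s)$ for all $s$. Varying $s$ shows that $h_2$ is affine with slope $\gamma$ on the relevant range. Inverting $h_2$ then gives $g(F_1(x_K))=\tilde h(g(x_K))$ for some function $\tilde h$. This is exactly condition~\eqref{eq1} of Theorem~\ref{recoverability}(i) with $f$ replaced by $f_1$. Corollary~\ref{corollary-ANM} (non-degenerate $g$) then yields $f_1(x)=\alpha x+\beta$.

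\textbf{Main obstacle.} The delicate step is Step 1, namely justifying that the noise coordinates can be varied freely enough. This needs the support of $\varepsilon_Y$ to contain an interval, so that $f_2'(u)=f_2'(v)$ holds on an open set and extends by continuity or analyticity. It also needs the identity to hold pointwise, not just almost surely. I would state a mild support assumption, or work with distributional equalities and characteristic functions if pointwise evaluation is unavailable. A secondary issue is clarifying ``non-degenerate'' to mean at least two nonzero weights; with a single nonzero weight $g$ is essentially a coordinate projection and every $f$ would trivially be consistent.
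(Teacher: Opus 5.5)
Your proposal is correct. It shares the paper's two-stage skeleton: first force $f_2$ to be affine, then reduce to the ANM condition~\eqref{eq1} for $f_1$ and invoke Corollary~\ref{corollary-ANM}. Your Step~1, however, uses a different mechanism. The paper specializes $x_{k_1}=\cdots=x_{k_m}=x^\ast$, so that $q(E):=f_2(f_1(x^\ast)+E)$ satisfies $\sum_j w_j q(\varepsilon_{k_j})=\tilde h\bigl(\sum_j w_j\varepsilon_{k_j}\bigr)$. This is literally the ANM condition in the noise variables, and the paper then reuses Corollary~\ref{corollary-ANM}, i.e.\ the measurable Cauchy equation. You instead keep $x_K$ arbitrary, perturb the noise along a direction that leaves $s=\sum_j w_j\varepsilon_{y_{k_j}}$ fixed, and read off $f_2'(a_j+\varepsilon_j)=f_2'(a_l+\varepsilon_l)$ directly. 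This needs differentiability of $f_2$, which the ``sufficiently smooth'' hypothesis of Theorem~\ref{recoverability}(ii) supplies, whereas the paper's reduction needs only measurability. In exchange, your argument is self-contained and works at any fixed $x_K$ and any interior noise configuration, so it adapts more easily to restricted noise supports. Your Cauchy fallback covers the merely measurable case. In Step~2, deducing that $h_2$ is affine with slope $\gamma$ and then inverting it is valid, and it shows the aggregated outer function is also forced to be affine, but it is a detour. The paper just fixes the noise at a constant vector $\varepsilon^\ast$, which immediately yields $g(F_1(x_K))=\bar h(g(x_K))$ without using invertibility of $h_2$; equivalently, you could set $s$ to any fixed value. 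Your reading of ``non-degenerate'' as at least two nonzero weights matches the paper's. Your caveats about pointwise validity and noise support flag assumptions the paper also uses implicitly, since it treats Eq.~\eqref{eq2} as an identity for all $x_K,\varepsilon_{y_K}\in\mathbb{R}^{|K|}$.
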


The PNL case inherits the same restriction as the ANM case, but the restriction applies in two steps. First, when the input variables are fixed, the remaining variation comes only from the noise terms, so Eq.~\ref{eq2} imposes an ANM-type aggregation constraint on the outer transformation \(f_2\). By the same reasoning as in the ANM corollary, this forces \(f_2\) to be affine. Once \(f_2\) is affine, the PNL condition reduces to the same type of aggregation constraint on \(f_1\). Applying the ANM corollary again implies that \(f_1\) must also be affine. Therefore, preserving the PNL form under non-degenerate linear aggregation requires both \(f_1\) and \(f_2\) to be affine. The detailed proof is provided in Appendix~\ref{proof:corollory 4.5}.


\subsection{conditional-independence consistency}

In this section, we move beyond the two-variable setting and study whether conditional independence relations are preserved when passing from latent micro-variables to aggregated macro-variables. In particular, we examine the consistency between the conditional independence structure of latent variables and that of their observable aggregates.

Without loss of generality, we consider the three-variable case.
Let the observable variables $\bar{X}, \bar{Y}, \bar{Z}$ be formed as Equation~\eqref{problem_dg} by aggregating (e.g., summing) an unknown collection of samples from the latent variables $X, Y, Z$. The central question is: under what conditions do the observed variables preserve the same conditional independence relationships as the latent variables, so that causal discovery based on $\bar{X}, \bar{Y}, \bar{Z}$ remains meaningful and reliable?
We focus on the two fundamental types of conditional relationships among latent variables (see Figure~\ref{structure}): $X \indep Z | Y(\text{chain or fork)}, X \not \indep Z|Y (\text{collider})$.\looseness=-1

We first consider the collider structure (case (ii)) shown in 
Figure~\ref{structure}, where $Y$ is a common effect \begin{wrapfigure}{r}{0.38\textwidth}
  \centering
\includegraphics[width=0.35\textwidth]{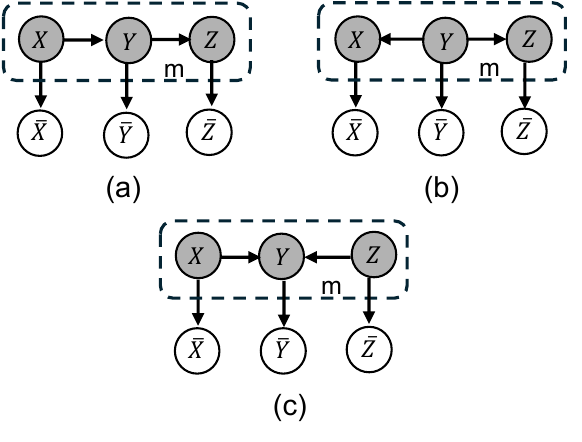}
  \caption{The three fundamental causal structure where we can only observe white nodes.}
  \label{structure}
   \vspace{-0.2cm}
\end{wrapfigure}of $X$ and $Z$. Under the faithfulness assumption and the Markov condition, conditioning on $Y$ (or any of its descendants) opens the path between $X$ and $Z$, rendering them conditionally dependent. While $\bar{Y}$ is the descendant of $Y$, the observed variables satisfy $\bar{X} \not \indep \bar{Z} | \bar{Y}$, which is consistent with the true underlying latent relationship $X \not \indep Z |Y$.

However, for the chain and fork cases, conditioning on the latent variable $Y$ blocks the path between $X$ and $Z$. In contrast, conditioning on its aggregated counterpart $\bar{Y}$ does not block the corresponding path between $\bar{X}$ and $\bar{Z}$. As a result, although the true latent relationship satisfies
$X \indep Z \mid Y,$ the observed variables may exhibit
$\bar{X} \not\indep \bar{Z} \mid \bar{Y}.$
This discrepancy indicates that conditional independence is not necessarily preserved under aggregation. Therefore, it is essential to characterize the conditions under which such equivalence holds. To this end, we propose the following theorem:

\begin{restatable}{theorem}{ciequ}
\label{ci equivalence}
    Let $X, Y, Z$ be micro-level latent variables, and let their macro-level aggregates be generated as in \eqref{problem_dg}. Assume that the micro-level conditional independence holds, i.e., $X \indep Z \mid Y$. Then the macro-level conditional independence holds if and only if, for every pair of values $\bar{x}$ and $\bar{z}$,
    \begin{equation}
        \operatorname{Cov}_{Y \mid \bar{Y}}
        \left(
        P(\bar{X}=\bar{x} \mid Y),
        P(\bar{Z}=\bar{z} \mid Y)
        \right) = 0.
    \end{equation}
\end{restatable}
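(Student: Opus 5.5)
The plan is to condition on the latent micro-level configuration of $Y$ and then average it out given $\bar Y$, i.e.\ a law-of-total-probability argument followed by a covariance decomposition. Throughout, $Y$ inside the conditional probabilities stands for the micro-sample vector $y_K=(y_i)_{i\in K}$. The aggregate $\bar Y=g(y_K)$ is then a deterministic function of it. The macro-level independence $\bar X\indep\bar Z\mid\bar Y$ means that for every $\bar x,\bar z$ and almost every $\bar y$,
\[
P(\bar X=\bar x,\bar Z=\bar z\mid\bar Y)=P(\bar X=\bar x\mid\bar Y)\,P(\bar Z=\bar z\mid\bar Y).
\]
For continuous variables I will read each $P(\cdot=\cdot)$ as a conditional density, or use the indicator events $\{\bar X\le\bar x\}$; the argument is identical.

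\textbf{Step 1 (micro-level factorization).} The micro-samples are i.i.d.\ across $i\in K$, and $X\indep Z\mid Y$ holds for each $i$. Hence $x_K\indep z_K\mid y_K$. Since $\bar X=g(x_K)$ and $\bar Z=g(z_K)$ are measurable functions of $x_K$ and $z_K$, it follows that
\[
P(\bar X=\bar x,\bar Z=\bar z\mid y_K)=P(\bar X=\bar x\mid y_K)\,P(\bar Z=\bar z\mid y_K).
\]

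\textbf{Step 2 (tower property).} Because $\bar Y$ is $\sigma(y_K)$-measurable, conditioning on $\bar Y$ amounts to averaging over $y_K$ given $\bar Y$. For the joint term this gives
\[
P(\bar X=\bar x,\bar Z=\bar z\mid\bar Y)=\mathbb E_{Y\mid\bar Y}\bigl[P(\bar X=\bar x\mid Y)\,P(\bar Z=\bar z\mid Y)\bigr].
\]
Likewise, for the marginals,
\[
P(\bar X=\bar x\mid\bar Y)=\mathbb E_{Y\mid\bar Y}[P(\bar X=\bar x\mid Y)],\qquad P(\bar Z=\bar z\mid\bar Y)=\mathbb E_{Y\mid\bar Y}[P(\bar Z=\bar z\mid Y)].
\]

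\textbf{Step 3 (the identity).} Subtracting the product of the two marginal expectations from the joint expectation yields
\[
P(\bar X=\bar x,\bar Z=\bar z\mid\bar Y)-P(\bar X=\bar x\mid\bar Y)P(\bar Z=\bar z\mid\bar Y)=\operatorname{Cov}_{Y\mid\bar Y}\bigl(P(\bar X=\bar x\mid Y),P(\bar Z=\bar z\mid Y)\bigr).
\]
The two conditional probabilities lie in $[0,1]$, so the expectations and the covariance are finite and no integrability issues arise. Both directions of the theorem follow at once: the macro-level factorization holds for all $\bar x,\bar z$ exactly when this covariance vanishes for all $\bar x,\bar z$.

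\textbf{Main obstacle.} The delicate step is Step 1 together with the measure-theoretic justification of Step 2. First, the hypothesis $X\indep Z\mid Y$ is stated for a single draw and must be lifted to the $m$-sample vectors. This requires the joint triples $(x_i,y_i,z_i)$ to be independent across $i$, together with the fact that conditional independence is preserved under products of independent blocks. Second, one must make sure that ``$\mid Y$'' in the theorem is interpreted as conditioning on the whole micro-vector $y_K$, not on a single $y_i$. If the index set $K_i$ is itself random, I would also condition on $K$ and note that, under i.i.d.\ sampling, the choice of $K$ does not affect the laws involved. Finally, the ``for every pair of values'' statement should be taken modulo null sets, using regular conditional distributions so that the tower identities hold pointwise almost surely.
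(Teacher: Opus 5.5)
Your proposal is correct and follows the paper's proof essentially step for step. Both arguments marginalize over the micro-vector $\mathbf{Y}$ given $\bar Y$, factor $P(\bar X,\bar Z\mid\mathbf{Y})$ using $\mathbf{X}\indep\mathbf{Z}\mid\mathbf{Y}$, and identify the deviation of $P(\bar X,\bar Z\mid\bar Y)$ from the product of its marginals as exactly the conditional covariance. You also justify two steps the paper takes for granted: lifting the single-draw conditional independence to the $m$-sample vectors via i.i.d.\ triples, and the tower identity $P(\bar X=\bar x\mid\bar Y)=\mathbb{E}_{\mathbf{Y}\mid\bar Y}[P(\bar X=\bar x\mid\mathbf{Y})]$ for the marginals. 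One small caveat: your $[0,1]$ integrability remark covers only the event/CDF reading, not the density reading.
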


The proof is provided in Appendix~\ref{app:ci eq}. Intuitively, the theorem identifies a “no residual co-variation” reflecting what happens when conditioning on $\bar{Y}$. Unlike conditioning on $Y$, fixing $\bar{Y}$ still leaves uncertainty about the latent variable $Y$, described by $Y \mid \bar{Y}$. This remaining variability can influence both $\bar{X}$ and $\bar{Z}$. The key issue is whether this influence is \emph{coordinated}. If different values of $Y$ consistent with the same $\bar{Y}$ tend to simultaneously increase or decrease the probabilities of $\bar{X}$ and $\bar{Z}$, then $\bar{X}$ and $\bar{Z}$ become dependent given $\bar{Y}$. 
The covariance term measures exactly this effect. 

If the covariance is nonzero, then there exist latent configurations of $Y$ that simultaneously push $\bar{X}$ and $\bar{Z}$ in the same (or opposite) direction, creating dependence at the aggregate level.
If the covariance is \textbf{zero}, then although $Y$ still varies given $\bar{Y}$, its influence on $\bar{X}$ and $\bar{Z}$ does not align: knowing that $Y$ makes $\bar{X}$ more likely tells us nothing about how it affects $\bar{Z}$, and vice versa.

An obvious sufficient condition for zero covariance is that at least one conditional probability, $P(\bar{X}=\bar{x} \mid Y)$ or $P(\bar{Z}=\bar{z} \mid Y)$, is constant with respect to $Y$. While this is generally difficult to satisfy, it can be met by constraining the functional relationships among the latent variables. Specifically, in linear systems, the macro-level variables inherit the conditional independence structure of the micro-level variables, as shown in the following corollary. Without loss of generality, we consider the fork case:

\begin{restatable}{corollary}{linearcici}
\label{linear_ci}
    Assume the latent variables $X, Y, Z$ satisfy a linear structural relationship:
    \begin{equation}
        X = \alpha Y + \epsilon_X, \quad Z = \beta Y + \epsilon_Z,
    \end{equation}
    where $\epsilon_X, \epsilon_Z$ are independent noise terms. Let $\bar{X}, \bar{Y}, \bar{Z}$ be the macro-level sum. Then, the micro-level conditional independence $X \indep Z \mid Y$ implies the macro-level conditional independence:
    \begin{equation}
        \bar{X} \indep \bar{Z} \mid \bar{Y}.
    \end{equation}
\end{restatable}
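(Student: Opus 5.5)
The plan is to reduce the macro-level variables to the same linear fork form and then read off the conditional independence directly. Fix an arbitrary index set $K=\{k_1,\dots,k_m\}$ of i.i.d.\ micro-samples. Summing the structural equations over $K$ gives
\begin{equation*}
\bar X=\sum_{i\in K}x_i=\alpha\sum_{i\in K}y_i+\sum_{i\in K}\epsilon_{X,i}=\alpha\bar Y+\bar\epsilon_X,\qquad \bar Z=\beta\bar Y+\bar\epsilon_Z .
\end{equation*}
This is the linear-aggregation/affine computation already used in the discussion after Theorem~\ref{recoverability} and Corollary~\ref{corollary-ANM}. Thus the aggregated system is again a linear fork with aggregated noises.

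The first step is to establish that $\bar Y$, $\bar\epsilon_X$ and $\bar\epsilon_Z$ are mutually independent. I will read ``independent noise terms'' in the standard SCM sense: $\epsilon_X$, $\epsilon_Z$ and $Y$ are jointly independent. Under this reading, each micro-sample $(y_i,\epsilon_{X,i},\epsilon_{Z,i})$ has independent components, and the samples are i.i.d.\ across $i\in K$. It follows that the three vectors $y_K$, $\epsilon_{X,K}$ and $\epsilon_{Z,K}$ are mutually independent. Measurable functions of independent vectors are independent, so $\bar Y\indep\bar\epsilon_X\indep\bar\epsilon_Z$ jointly.

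The second step is to condition on $\bar Y=\bar y$. Given this event, $\bar X=\alpha\bar y+\bar\epsilon_X$ and $\bar Z=\beta\bar y+\bar\epsilon_Z$. The pair $(\bar\epsilon_X,\bar\epsilon_Z)$ is independent of $\bar Y$ and has independent components. Hence the conditional law of $(\bar X,\bar Z)$ given $\bar Y=\bar y$ is the product of the laws of two shifted independent variables, which gives $\bar X\indep\bar Z\mid\bar Y$. To avoid conditioning on null events, I will phrase this through regular conditional distributions. Concretely, I will check that for bounded measurable $\phi,\psi$,
\begin{equation*}
E[\phi(\bar X)\psi(\bar Z)\mid\bar Y]=E[\phi(\bar X)\mid\bar Y]\,E[\psi(\bar Z)\mid\bar Y].
\end{equation*}
This follows by freezing $\bar Y$ and applying Fubini to the independent pieces.

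As a consistency check with Theorem~\ref{ci equivalence}, I will note the following. With the latent micro-configuration $y_K$ in the role of $Y$, the quantity $P(\bar X=\bar x\mid y_K)$ is the law of $\alpha\bar Y+\bar\epsilon_X$, so it depends on $y_K$ only through $\bar Y$. It is therefore constant under $Y\mid\bar Y$, and the covariance vanishes. This is the ``one conditional probability is constant'' sufficient condition mentioned before the corollary.

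The main obstacle is the independence bookkeeping in the first step, not the algebra. The conclusion genuinely needs the noises to be independent of $Y$ and of each other, and the micro-samples to be independent across cells; the proof must state this reading of the hypothesis explicitly. A second point concerns the measure-theoretic conditioning in the continuous case. I will handle it via the conditional-expectation factorization above rather than pointwise densities. The chain case follows identically, since $Z=\beta Y+\epsilon_Z$ with $Y=\alpha X+\epsilon_Y$ aggregates to the same form.
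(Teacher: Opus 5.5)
Your proposal is correct and follows the paper's proof essentially step for step. Both sum the structural equations to get $\bar X=\alpha\bar Y+\bar\epsilon_X$ and $\bar Z=\beta\bar Y+\bar\epsilon_Z$, use the i.i.d.\ micro-samples to make $\bar Y$, $\bar\epsilon_X$, $\bar\epsilon_Z$ mutually independent, and then condition on $\bar Y$. Your version is somewhat more careful: you state independence of the noises from $Y$ as an explicit SCM assumption, whereas the paper claims it follows from $X\indep Z\mid Y$, which it does not in general; and you phrase the last step via a conditional-expectation factorization rather than conditioning on a point.
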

The proof is provided in Appendix~\ref{app:linear}. The corollary indicates that, in practical settings, when the true underlying relationships among variables are linear, causal discovery based on aggregated data remains meaningful and reliable, as the conditional independence structure is preserved. 
From another perspective, when the relationships are nonlinear, the condition in Theorem~\ref{ci equivalence} is generally not satisfied, and causal conclusions drawn from aggregated variables can be misleading. This phenomenon is further validated in our synthetic experiments, where nonlinear data-generating processes lead to violations of conditional independence at the aggregate level.

\section{From Synthetic Data to Bulk Gene Expression: Empirical Evaluation}
Having established the theoretical results, we now investigate their empirical implications. We begin with synthetic datasets, where the ground-truth causal structure is known, to verify the proposed necessary and sufficient conditions for causal recoverability under aggregation. We then analyze real bulk gene expression data to examine whether these conditions hold in practice and whether causal relations can be recovered from aggregated observations.
\subsection{Synthetic Data: Validation of Recoverability under Aggregation}

To validate the proposed necessary and sufficient conditions for causal recoverability under aggregation, we conduct a series of controlled simulation studies. We consider both the theoretically recoverable setting, namely, linear aggregation combined with affine causal relations, and deliberately misspecified settings involving nonlinear aggregation and/or nonlinear causal mechanisms. This design allows us to systematically examine when aggregation preserves the assumptions required for causal discovery and when it violates them.

For \emph{functional-form consistency}, we consider two classes of structural causal models. Under the ANM, data are generated according to
\(
Y = f(X) + \varepsilon_Y,
\)
where, in the nonlinear setting, \(f\) is selected from the function class
\(
\{\exp(x),\, x^p,\, \mathrm{ReLU}(wx+b),\, \sin(x)\},
\)
and, in the affine setting, \(f(x)=ax+b\), with coefficients \(a\) and \(b\) randomly sampled in each trial. Under the PNL model, data are generated according to
\(
Y = f_2\!\bigl(f_1(X)+\varepsilon_Y\bigr),
\)
where \(f_1\) is either selected from the same nonlinear function class or specified as an affine function, and \(f_2\) is an invertible function chosen from
\(
\{\exp(x),\, x^3,\, \sinh(x)\}.
\)
The noise term \(\varepsilon_Y\) is sampled from a Gaussian, skewed, or uniform distribution. For the aggregation step, we consider both a linear operation (mean of subsamples) and a nonlinear operation (median of subsamples).

For \emph{conditional-independence consistency}, we generate multivariate structural causal models of the form
\(
X = f\!\bigl(X_{\mathrm{pa}(X)}\bigr) + \varepsilon_X,
\)
and examine two representative causal motifs: the fork structure
($X\leftarrow Y \rightarrow Z$) and the collider structure (\(
X \to Y \leftarrow Z\)). All other experimental settings follow those used in the functional-form consistency experiments. We conduct 1000 experiments for each setting. The precision of recovering causal direction and conditional dependencies, reported in Table~\ref{tab1}, shows that, consistent with theoretical analysis, only linear causal relationships combined with linear aggregation preserve both the functional assumptions required by ANM- and PNL-based methods and the conditional-independence relations exploited by constraint-based approaches. 
\begin{table*}[t]
\centering
\caption{Precision of functional-based and constraint-based causal discovery methods before (raw) and after aggregation (agg) over 1,000 experimental runs, all Unconditional Independent (UI) and Conditional Independent (CI) tests use a p-value threshold of 0.05.}
\label{tab1}
\scriptsize
\setlength{\tabcolsep}{4pt}
\resizebox{\textwidth}{!}{%
\begin{tabular}{llcccccccccccc}
\toprule
\multirow{2}{*}{Causation} & \multirow{2}{*}{Aggregation}
& \multicolumn{2}{c}{ANM}
& \multicolumn{2}{c}{PNL}
& \multicolumn{2}{c}{Fork-UI}
& \multicolumn{2}{c}{Fork-CI}
& \multicolumn{2}{c}{Collider-UI}
& \multicolumn{2}{c}{Collider-CI} \\
\cmidrule(lr){3-4} \cmidrule(lr){5-6} \cmidrule(lr){7-8} \cmidrule(lr){9-10} \cmidrule(lr){11-12} \cmidrule(lr){13-14}
& & raw & agg & raw & agg & raw & agg & raw & agg & raw & agg & raw & agg \\
\midrule
linear    & linear (mean)    & 1.000 & 0.830 & 1.000 & 0.912 & 0.999 & 0.994 & 0.943 & 0.932 & 0.953 & 0.955 & 1.000 & 0.999 \\
linear    & nonlinear (median) & 1.000 & 0.613 & 1.000 & 0.350 & 0.999 & 0.971 & 0.960 & 0.256 & 0.951 & 0.953 & 1.000 & 0.930 \\
nonlinear & linear (mean)   & 0.958 & 0.450 & 0.838 & 0.554 & 0.884 & 0.665 & 0.846 & 0.402 & 0.956 & 0.948 & 0.932 & 0.508 \\
nonlinear & nonlinear (median) & 0.967 & 0.591 & 0.840 & 0.460 & 0.850 & 0.414 & 0.865 & 0.617 & 0.937 & 0.961 & 0.953 & 0.432 \\
\bottomrule
\end{tabular}%
}
\end{table*}

\subsection{Real Bulk Gene Expression Data: Empirical Assessment of the Recoverability Conditions}\label{sec:5.2}
After validating the theoretical results on synthetic data, we next examine whether real bulk gene expression data are compatible with the recoverable conditions (linear aggregation with affine regulatory function) identified in Section~\ref{sec:4}.  As discussed in Section~\ref{sec:3}, standard normalization does not alter the fact that bulk expression measurements arise from the sum (linear) aggregation across cells. Because the ground-truth gene regulatory network is unknown for real transcriptomic data, our aim here is not to establish recoverability directly, but to assess whether another key theoretical prerequisite, approximate linearity in regulation among genes, holds in practice.

\paragraph{Datasets.} To test whether the relationships among genes are approximately linear, we analyze four bulk gene expression datasets with relatively large sample sizes: two microarray datasets, GSE39582 (585 samples)~\cite{marisa2013gene} and GSE20142 (1240 samples)~\cite{dubois2010multiple}, and two bulk RNA-seq datasets, GSE57945 (260 samples)~\cite{lee2024unraveling} and GSE245006 (518 samples)~\cite{stokes2024subtype}. The use of bulk data is motivated by the principle above: \textit{if the underlying regulatory relations are predominantly affine, traces of this linear structure should remain detectable in aggregated bulk observations}, because linear relations are preserved under linear mixing. Hence, bulk expression profiles can be used to provide indirect empirical evidence regarding whether the underlying gene dependencies admit a good linear approximation. As complementary evidence, we also analyze four single-cell gene expression datasets, Adamson~\cite{adamson2016multiplexed}, Norman~\cite{norman2019exploring}, Srivatsan~\cite{srivatsan2020massively}, and Tian~\cite{tian2021genome}. Since single-cell data are closer to the underlying cellular-level gene expression distributions, they offer a more direct assessment of whether the true relationships among genes are approximately linear. 

\begin{figure*}
\centering
\includegraphics[width=1\textwidth]{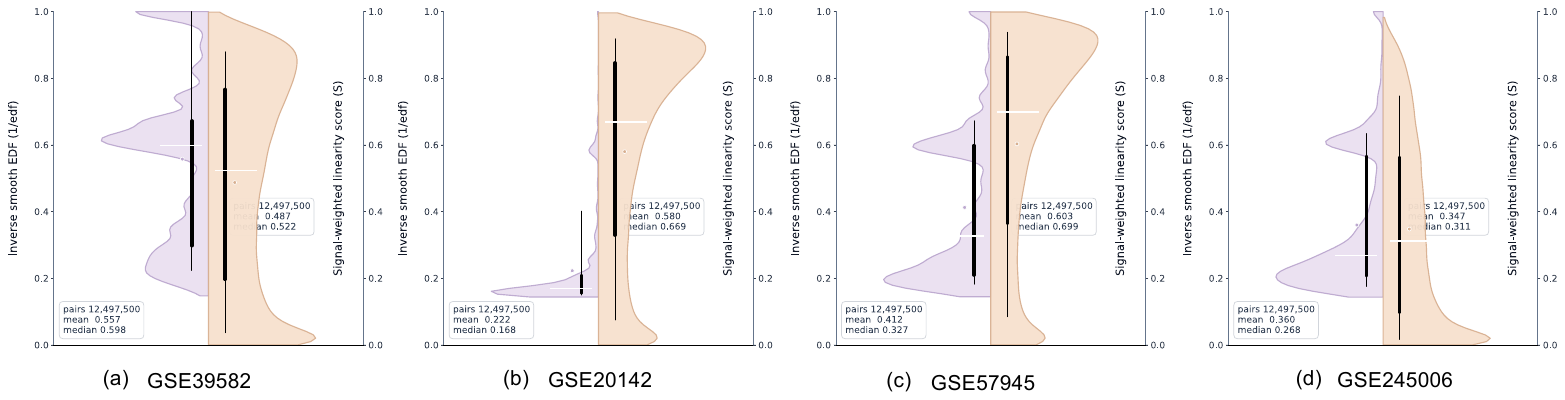}
\caption{Distribution of \textit{inverse smooth edf} and \textit{signal-weighted linearity score} across microarray (a) $\&$ (b) and bulk RNA-seq (c) $\&$ (d) datasets.}
\label{fig1:pearson-r2-distribution}
\end{figure*}

\paragraph{Preprocessing and linearity assessment.}

For the microarray datasets, we apply standard preprocessing to obtain normalized gene-level expression values, including background correction and normalization, together with probe summarization where applicable. For the bulk RNA-seq datasets, raw counts are converted to library-size-normalized log$_2$ counts per million (log$_2$-CPM). Our preprocessing is aligned with standardized pipelines provided by Gemma~\cite{lim2021curation}. To reduce noise and focus on informative variation, we first remove lowly expressed genes, retaining genes with expression values greater than 5 in at least 20 samples, and then exclude genes with low variability across samples. We finally retain the top 5,000 genes ranked by interquartile range (IQR) for downstream analysis.

We quantify pairwise functional linearity between gene pairs using two complementary diagnostics: the inverse effective degrees of freedom from generalized additive models, \(1/\mathrm{edf}\), which reflects the flexibility required by the fitted smooth function, and a signal-weighted linearity score, \(S\), which measures deviation from the best affine approximation. The distributions of these diagnostics for the four bulk datasets are shown in Fig.~\ref{fig1:pearson-r2-distribution}. Across datasets, most gene pairs have low \(1/\mathrm{edf}\) and \(S\), indicating that strong pairwise functional linearity is not widespread in bulk expression data. Furthermore, to support the conclusion from bulk gene expression data, we apply the same analysis to the Adamson, Norman, Srivatsan, and Tian single-cell datasets. Because single-cell RNA-seq measures gene expression at cellular resolution, it provides a more direct assessment of pairwise functional linearity without the smoothing effects induced by bulk aggregation. Then, the corresponding distributions shown in Fig.~\ref{fig2:sc} and detailed analysis are provided in the Appendix~\ref{sca}. 

Nevertheless, even with this aggregation-induced increase in apparent linearity, \(1/\mathrm{edf}\) and \(S\) remain low for most gene pairs in the bulk datasets. This suggests that the recoverability condition required by our theory is unlikely to hold broadly in real transcriptomic data, consistent with the synthetic experiments in Table~\ref{tab1}.

\section{Conclusion}
This work clarifies the fundamental limits of recovering causal relations from aggregated bulk gene expression data. We show that recovering underlying causal relations from aggregated measurements is possible only in a restricted linear regime, and that this regime appears rarely plausible in real transcriptomic data. These results establish a principled boundary for what can and cannot be inferred from bulk expression alone, and highlight the need for additional assumptions or auxiliary information for reliable causal recovery. Limitations are discussed in Appendix~\ref{limiation}.

\bibliography{reference}
\bibliographystyle{unsrt}

\newpage
\appendix

\section{Proof for Functional-form Consistency}\label{proof:F}
\subsection{Proof for Theorem~\ref{recoverability}} \label{proof:recoverability}

\THEOREM*
\begin{proof}[Proof]
\textbf{(i) ANM.} \textbf{(`$\Rightarrow$' direction)}
Suppose first that the aggregated variables are functional-form consistent with the ANM family, i.e., there exists a function \(h:\mathbb{R}\to\mathbb{R}\) such that \(\bar Y = h(\bar X)+\bar\varepsilon_Y.\) Since the underlying structural equation is \(y_i = f(x_i)+\varepsilon_{Y_i}, i\in K\), we may write \(y_K = F(x_K)+\varepsilon_{y_K}\).
As $X\indep\varepsilon_Y$, by the additivity of \(a\),
\begin{equation}
\bar Y
= g(y_K)
= a\!\bigl(F(x_K)+\varepsilon_{y_K}\bigr)
= a\!\bigl(F(x_K)\bigr)+g(\varepsilon_{y_K}).
\end{equation}
Then,
\(a\!\bigl(F(x_K)\bigr)+g(\varepsilon_{y_K})=h(\bar X)+\bar\varepsilon_Y
= h(g(x_K))+g(\varepsilon_{y_K})\). Hence, $g(F(x_K)) = h(g(x_K))$. Therefore, Eq.~\ref{eq1} is necessary condition of functional-from consistency.

\textbf{(`$\Leftarrow$' direction)} Conversely, assume that there exists a function \(h:\mathbb{R}\to\mathbb{R}\) such that
\(a\!\bigl(F(x_K)\bigr)=h(g(x_K))\), for all \(x_K\in\mathbb{R}^{|K|}\). Then
\(\bar Y
= g(y_K)
= a\!\bigl(F(x_K)+\varepsilon_{y_K}\bigr)
= a\!\bigl(F(x_K)\bigr)+g(\varepsilon_{y_K})
= h(g(x_K))+g(\varepsilon_{y_K})
= h(\bar X)+\bar\varepsilon_Y\).
Thus, the aggregated variables are consistent in functional form of ANM, the sufficient condition holds.

\medskip
\textbf{(ii) PNL.} \textbf{(`$\Rightarrow$' direction)}
Suppose first that the aggregated variables are functionally consistent with the PNL family, i.e., there exist functions \(h_1:\mathbb{R}\to\mathbb{R}\) and \(h_2:\mathbb{R}\to\mathbb{R}\), with \(h_2\) invertible and sufficiently smooth, such that \(\bar Y=h_2\!\bigl(h_1(\bar X)+\bar\varepsilon_Y\bigr)\). Substituting \(\bar X=g(x_K)\), \(\bar\varepsilon_Y=g(\varepsilon_{y_K})\), and 
\(\bar{Y} = g(a_{y_K})=a\!\left(F_2\!\bigl(F_1(x_K)+\varepsilon_{y_K}\bigr)\right)
=
h_2\!\left(h_1(g(x_K))+g(\varepsilon_{y_K})\right)\).
Thus, Eq.~\ref{eq2} is the necessary condition of functional-form consistency.\looseness=-1

\textbf{(`$\Leftarrow$' direction)} Conversely, assume that there exist functions \(h_1:\mathbb{R}\to\mathbb{R}\) and \(h_2:\mathbb{R}\to\mathbb{R}\), with \(h_2\) invertible and sufficiently smooth, such that
\(a\!\left(F_2\!\bigl(F_1(x_K)+\varepsilon_{y_K}\bigr)\right)
=
h_2\!\left(h_1(g(x_K))+g(\varepsilon_{y_K})\right)\)
for all \(x_K\in\mathbb{R}^{|K|}\) and \(\varepsilon_{y_K}\in\mathbb{R}^{|K|}\). Then
\(
\bar Y
=
h_2\!\left(h_1(\bar X)+\bar\varepsilon_Y\right),
\)
which is precisely a PNL representation for the aggregated variables. Hence PNL functional-form consistency holds.

As in part (i), \(X_K\perp \varepsilon_{y_K}\) and measurability of \(a\) imply
\(
\bar X \perp \bar\varepsilon_Y.
\)
Therefore, the aggregated PNL model satisfies the required independence condition. Under the standard identifiability conditions for PNL models, the causal relation is recoverable from the aggregated data.
\end{proof}

\subsection{Proof of Corollary~\ref{corollary-ANM}}\label{proof:corollory 4.4}

\coroanm*
\begin{proof}
Considering the linear aggregation \(g\) on any subsample set \(x_K\), 
\(K=\{k_1,k_2,\dots,k_m\}\), since \(g\) is linear, it can be written as a fixed weighted sum
\(g(x_K)=\sum_{j=1}^m w_j x_{k_j}\), where \(w_1,\ldots,w_m\in\mathbb{R}\) do not depend on the input values. Since \(g\) is non-degenerate, without loss of generality, assume \(w_1\neq 0\) and \(w_2\neq 0\).

If the functional form is consistent under aggregation, it requires
\(g(F(x_K))=h(g(x_K))\), i.e., Eq.~\ref{eq1}. Therefore,
\(\sum_{j=1}^m w_j f(x_{k_j})
    =
    h\!\left(\sum_{j=1}^m w_j x_{k_j}\right)\).
Set \(x_{k_3}=\cdots=x_{k_m}=0\). Then
\begin{equation}
    w_1 f(x_{k_1})+w_2 f(x_{k_2})
    +\sum_{j=3}^m w_j f(0)
    =
    h(w_1x_{k_1}+w_2x_{k_2}).
    \label{eq4}
\end{equation}

Let \(f_0=f(0)\), define \(\phi(x):=f(x)-f_0\), and define
\(H(t):=h(t)-\left(\sum_{j=1}^m w_j\right)f_0\). Then Eq.~\ref{eq4} becomes
\begin{equation}
    w_1\phi(x_{k_1})+w_2\phi(x_{k_2})
    =
    H(w_1x_{k_1}+w_2x_{k_2}).
    \label{eq5}
\end{equation}
Now let \(x_{k_2}=0\). We obtain
\(H(w_1x_{k_1})=w_1\phi(x_{k_1})\). Similarly, letting \(x_{k_1}=0\), we obtain
\(H(w_2x_{k_2})=w_2\phi(x_{k_2})\). Substituting these two identities back into Eq.~\ref{eq5} yields
  $  H(w_1x_{k_1}+w_2x_{k_2})
    =
    H(w_1x_{k_1})+H(w_2x_{k_2})$.
Since \(w_1\neq 0\) and \(w_2\neq 0\), for any \(s,t\in\mathbb{R}\), we can set
\(s=w_1x_{k_1}\) and \(t=w_2x_{k_2}\). Hence,
\begin{equation}
    H(s+t)=H(s)+H(t),
    \qquad \forall\, s,t\in\mathbb{R}.
\end{equation}
Thus \(H\) satisfies the Cauchy functional equation. Since \(f\) is measurable, \(\phi\) is measurable, and \(H(w_1x)=w_1\phi(x)\) implies that \(H\) is measurable. By the classical regularity result for measurable additive functions, there exists a constant \(c\in\mathbb{R}\) such that
\(H(t)=ct\), for all \(t\in\mathbb{R}\).

Using \(H(w_1x)=w_1\phi(x)\), we have
\(w_1\phi(x)=cw_1x\). Since \(w_1\neq 0\), it follows that
\(\phi(x)=cx\). Therefore,
\[
    f(x)=\phi(x)+f_0=cx+f(0),
\]
which shows that \(f\) is affine.
\end{proof}


\subsection{proof of Corollary~\ref{corollary-PNL}}\label{proof:corollory 4.5}
\COROPNL*
\begin{proof}
Considering the linear aggregation \(g\) on the subsample set \(x_K\), 
\(K=\{k_1,k_2,\dots,k_m\}\), since \(g\) is linear, it can be written as a fixed weighted sum
    $g(z_K)=\sum_{j=1}^m w_j z_{k_j},$
where \(w_1,\ldots,w_m\in\mathbb{R}\) do not depend on the input values. Since \(g\) is non-degenerate, at least two weights are nonzero. Denote \(W:=\sum_{j=1}^m w_j\).

If the functional form is consistent under aggregation, then by Theorem~\ref{recoverability}, there exist functions \(h_1\) and invertible \(h_2\) such that, for \(x_K\) and realizations 
\(\varepsilon_K=(\varepsilon_{k_1},\dots,\varepsilon_{k_m})\), following Eq.~\ref{eq2},
\begin{equation}
    \sum_{j=1}^m w_j f_2\!\bigl(f_1(x_{k_j})+\varepsilon_{k_j}\bigr)
    =
    h_2\!\left(
    h_1\!\left(\sum_{j=1}^m w_j x_{k_j}\right)
    + \sum_{j=1}^m w_j \varepsilon_{k_j}
    \right).
    \label{eq:pnl_weighted}
\end{equation}

We first show that \(f_2\) is affine. Fix 
\(x_{k_1}=\cdots=x_{k_m}=x^\ast\), and define \(c:=f_1(x^\ast)\). Then Eq.~\ref{eq:pnl_weighted} reduces to
\begin{equation}
    \sum_{j=1}^m w_j f_2(c+\varepsilon_{k_j})
    =
    \tilde h\!\left(\sum_{j=1}^m w_j\varepsilon_{k_j}\right),
    \label{eq:pnl_f2}
\end{equation}
where 
\(\tilde h(t):=h_2\!\bigl(h_1(Wx^\ast)+t\bigr)\).
Define \(q(E):=f_2(c+E)\). Then Eq.~\ref{eq:pnl_f2} becomes
    $\sum_{j=1}^m w_j q(\varepsilon_{k_j})
    =
    \tilde h\!\left(\sum_{j=1}^m w_j\varepsilon_{k_j}\right)$. This has the same form as the condition in Corollary 1 under the same non-degenerate linear aggregation \(g\). Therefore, \(q\) must be affine. Hence, there exist constants \(\gamma,\eta\in\mathbb{R}\) such that
\(q(E)=\gamma E+\eta\). Therefore,
\(f_2(Z)=\gamma Z+\delta\), where \(\delta:=\eta-\gamma c\). Thus \(f_2\) is affine. Since \(f_2\) is invertible in the PNL model, we have \(\gamma\neq 0\).

Next, substituting \(f_2(Z)=\gamma Z+\delta\) into Eq.~\ref{eq:pnl_weighted}, we obtain
\begin{equation}
    \gamma \sum_{j=1}^m w_j f_1(x_{k_j})
    + \gamma \sum_{j=1}^m w_j\varepsilon_{k_j}
    + \delta W
    =
    h_2\!\left(
    h_1\!\left(\sum_{j=1}^m w_j x_{k_j}\right)
    + \sum_{j=1}^m w_j \varepsilon_{k_j}
    \right).
    \label{eq:pnl3}
\end{equation}

Now fix 
\(\varepsilon_{k_1}=\cdots=\varepsilon_{k_m}=\varepsilon^\ast\) 
for an arbitrary constant \(\varepsilon^\ast\in\mathbb{R}\). Then Eq.~\ref{eq:pnl3} reduces to
\begin{equation}
    \sum_{j=1}^m w_j f_1(x_{k_j})
    =
    \bar h\!\left(\sum_{j=1}^m w_j x_{k_j}\right),
\end{equation}
where
\(
\bar h(t):=
\frac{1}{\gamma}
\left[
h_2\!\bigl(h_1(t)+W\varepsilon^\ast\bigr)
-\gamma W\varepsilon^\ast-\delta W
\right]
\).
Again, this has the same form as the condition in Corollary 1 under the non-degenerate linear aggregation \(g\). Therefore, \(f_1\) must also be affine. Hence, both \(f_1\) and \(f_2\) are affine.
\end{proof}

\section{Proof for Conditional Independence Equivalence} \label{proof:CI}
\subsection{Proof for Theorem~\ref{ci equivalence}} \label{app:ci eq}

\ciequ*


By the Law of Total Probability, we can express the joint conditional distribution of the macro-variables by marginalizing out the micro-states of $\mathbf{Y}$, constrained by the observed macro-state $\bar{Y}$:
\[
P(\bar{X}, \bar{Z} \mid \bar{Y}) 
= \int P(\bar{X}, \bar{Z} \mid \mathbf{Y}, \bar{Y}) 
P(\mathbf{Y} \mid \bar{Y}) \, d\mathbf{Y}.
\]

Because the exact micro-state $\mathbf{Y}$ deterministically defines the $\bar{Y}$, conditioning on both is redundant. We can drop $\bar{Y}$ from the first term:
\[
P(\bar{X}, \bar{Z} \mid \bar{Y}) 
= \int P(\bar{X}, \bar{Z} \mid \mathbf{Y}) 
P(\mathbf{Y} \mid \bar{Y}) \, d\mathbf{Y}.
\]

Since $\bar{X}$ is purely a function of $\mathbf{X}$ and $\bar{Z}$ is purely a function of $\mathbf{Z}$, the micro-level independence 
\[
\mathbf{X} \perp\!\!\!\perp \mathbf{Z} \mid \mathbf{Y}
\]
implies
\[
P(\bar{X}, \bar{Z} \mid \mathbf{Y}) 
= P(\bar{X} \mid \mathbf{Y}) P(\bar{Z} \mid \mathbf{Y}).
\]

Substituting this back into the integral gives:
\[
P(\bar{X}, \bar{Z} \mid \bar{Y}) 
= \mathbb{E}_{\mathbf{Y} \mid \bar{Y}} 
\left[ P(\bar{X} \mid \mathbf{Y}) P(\bar{Z} \mid \mathbf{Y}) \right].
\]

For macro-level conditional independence to hold, this must factorize:
\[
P(\bar{X}, \bar{Z} \mid \bar{Y}) 
= P(\bar{X} \mid \bar{Y}) P(\bar{Z} \mid \bar{Y}).
\]

Thus,
\[
\mathbb{E}_{\mathbf{Y} \mid \bar{Y}} 
\left[ P(\bar{X} \mid \mathbf{Y}) P(\bar{Z} \mid \mathbf{Y}) \right]
=
\mathbb{E}_{\mathbf{Y} \mid \bar{Y}} 
\left[ P(\bar{X} \mid \mathbf{Y}) \right]
\mathbb{E}_{\mathbf{Y} \mid \bar{Y}} 
\left[ P(\bar{Z} \mid \mathbf{Y}) \right].
\]

This holds if and only if the covariance is zero:
\[
\operatorname{Cov}_{\mathbf{Y} \mid \bar{Y}} 
\left( 
P(\bar{X} \mid \mathbf{Y}), 
P(\bar{Z} \mid \mathbf{Y}) 
\right) = 0.
\]

\subsection{Proof of Corollary~\ref{linear_ci}} \label{app:linear}

\linearcici*

Assume the micro-level latent random variables $X, Y, Z$ follow the linear structural equations:
\begin{equation*}
    X = \alpha Y + \epsilon_X, \quad Z = \beta Y + \epsilon_Z
\end{equation*}

For $K$ independent realizations, let the specific samples be denoted by lowercase letters $x_j, y_j, z_j$ for $j \in \{1, \dots, K\}$. The structural equations for each sample are:
\begin{equation*}
    x_j = \alpha y_j + \epsilon_{x,j}, \quad z_j = \beta y_j + \epsilon_{z,j}
\end{equation*}

By definition, the realizations of the macro-variables are formed by summing these $m$ samples:
\begin{align*}
    \bar{x} &= \sum_{j=1}^K x_j = \sum_{j=1}^K (\alpha y_j + \epsilon_{x,j}) = \alpha \sum_{j=1}^K y_j + \sum_{j=1}^K \epsilon_{x,j} \\
    \bar{z} &= \sum_{j=1}^K z_j = \sum_{j=1}^K (\beta y_j + \epsilon_{z,j}) = \beta \sum_{j=1}^K y_j + \sum_{j=1}^K \epsilon_{z,j}
\end{align*}

Defining the aggregated noise realizations as $\bar{\epsilon}_x = \sum_{j=1}^K \epsilon_{x,j}$ and $\bar{\epsilon}_z = \sum_{j=1}^K \epsilon_{z,j}$, we obtain the macro-level relationships for the realizations:
\begin{equation*}
    \bar{x} = \alpha \bar{y} + \bar{\epsilon}_x, \quad \bar{z} = \beta \bar{y} + \bar{\epsilon}_z
\end{equation*}

Generalizing this back to the random variables yields the macro-level structural equations:
\begin{equation*}
    \bar{X} = \alpha \bar{Y} + \bar{\epsilon}_X, \quad \bar{Z} = \beta \bar{Y} + \bar{\epsilon}_Z
\end{equation*}

At the micro-level, the conditional independence assumption $X \indep Z \mid Y$ dictates that the exogenous noise variables $\epsilon_X$ and $\epsilon_Z$ are mutually independent, and both are independent of the latent cause $Y$. 

Because the $K$ samples are independently drawn, the aggregated noise variable $\bar{\epsilon}_X$ (the sum of independent $\epsilon_{x,j}$) is independent of $\bar{\epsilon}_Z$ (the sum of independent $\epsilon_{z,j}$). Furthermore, both $\bar{\epsilon}_X$ and $\bar{\epsilon}_Z$ remain completely independent of the aggregated latent cause $\bar{Y}$.

When conditioning on the macro-variable $\bar{Y} = \bar{y}$, the terms $\alpha \bar{y}$ and $\beta \bar{y}$ act as deterministic constants. Consequently, any remaining variation in $\bar{X}$ is driven strictly by $\bar{\epsilon}_X$, and any remaining variation in $\bar{Z}$ is driven strictly by $\bar{\epsilon}_Z$. 

Given that $\bar{\epsilon}_X \indep \bar{\epsilon}_Z$, it follows immediately that:
\begin{equation*}
    \bar{X} \indep \bar{Z} \mid \bar{Y}
\end{equation*}

\section{Assessment of pairwise functional linearity}
\label{appendix:linearity-assessment}
\subsection{Assessment methods}
We assess the linearity of pairwise gene-gene relationships using two complementary measures. Let \(\mathcal{A}\in\mathbb{R}^{n\times p}\) denote the normalized expression matrix, where \(n\) is the number of samples or cells and \(p\) is the number of genes. For each ordered gene pair \((X,Y)\), we treat \(X\) as the predictor and \(Y\) as the response. Let \(x_i=\mathcal{A}_{ij}\) and \(y_i=\mathcal{A}_{il}\) denote their expression values in sample or cell \(i\).

First, we fit a univariate generalized additive model (GAM)
\begin{equation}
    y_i = \mu + s(x_i) + \eta_i,
\end{equation}
where \(s(\cdot)\) is a smooth function estimated from the data. The effective degrees of freedom, denoted by \(\widehat{\mathrm{edf}}_{j\to l}\), measure the flexibility used by the fitted smooth function. An edf close to \(1\) indicates that the fitted function is approximately linear, whereas a larger edf indicates stronger nonlinearity. We therefore define the inverse-edf linearity score as
\begin{equation}
    L^{\mathrm{edf}}_{j\to l}
    =
    \frac{1}{\max\{\widehat{\mathrm{edf}}_{j\to l},1\}}.
\end{equation}
A larger value of \(L^{\mathrm{edf}}_{j\to l}\) indicates stronger linearity.

Second, we directly compare the estimated pairwise function with its best affine approximation. Let
\(
\hat f_{j\to l}(x)=\hat\mu+\hat s(x)
\)
be the fitted function from the GAM. We evaluate this function at the observed predictor values and obtain
\(
\hat f_i=\hat f_{j\to l}(x_i)
\).
The best affine approximation is defined as
\begin{equation}
    (\hat\alpha,\hat\beta)
    =
    \arg\min_{\alpha,\beta}
    \sum_{i=1}^n
    \left[
        \hat f_i - (\alpha x_i+\beta)
    \right]^2 .
\end{equation}
We then compute the normalized affine-deviation score
\begin{equation}
    D^{\mathrm{aff}}_{j\to l}
    =
    \left(
    \frac{
    \sum_{i=1}^n
    \left[
        \hat f_i - (\hat\alpha x_i+\hat\beta)
    \right]^2
    }{
    \sum_{i=1}^n
    \left[
        \hat f_i - \bar f
    \right]^2
    }
    \right)^{1/2},
    \qquad
    \bar f=\frac{1}{n}\sum_{i=1}^n \hat f_i .
\end{equation}
A smaller value of \(D^{\mathrm{aff}}_{j\to l}\) indicates that the estimated function can be well approximated by an affine mapping. When the fitted function is exactly affine over the observed range of \(X_j\), \(D^{\mathrm{aff}}_{j\to l}=0\).

These two measures capture different aspects of linearity. The inverse-edf score reflects how much flexibility the GAM needs to fit the pairwise relationship, while the affine-deviation score measures how far the fitted relationship is from its closest affine approximation. Together, they provide a robustness check for whether the estimated pairwise regulatory functions are close to affine.

\subsection{single-cell analysis
}\label{sca}
\begin{figure*}
\centering
\includegraphics[width=1\textwidth]{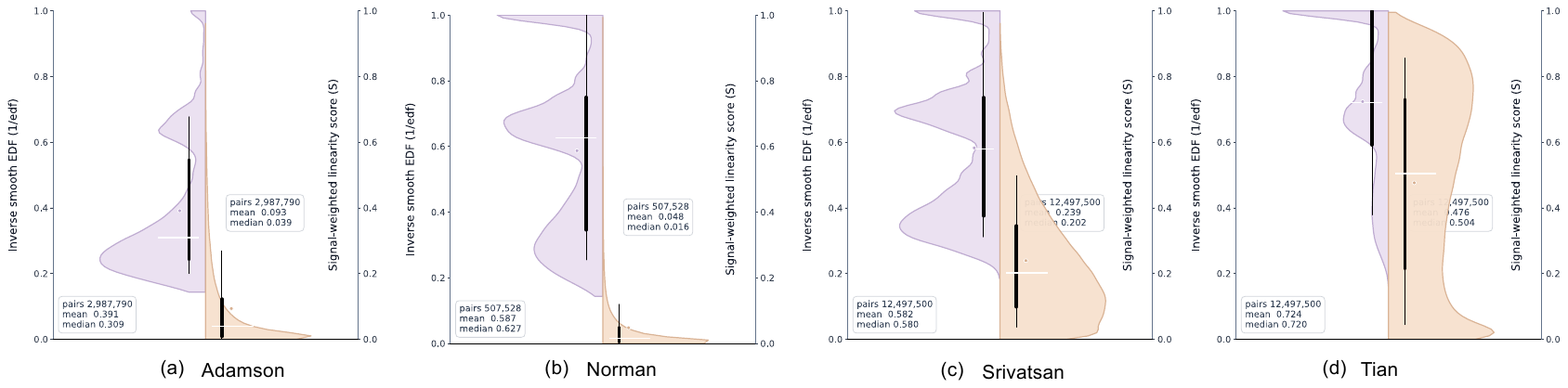}
\caption{Distribution of \textit{inverse smooth edf} and \textit{signal-weighted linearity score} across single-cell datasets.\looseness=-1}
\label{fig2:sc}
\end{figure*}

Analysis over Adamson, Norman, Srivatsan, and Tian single-cell datasets is shown in Fig.~\ref{fig2:sc}. These distributions are even more concentrated near zero compared to bulk gene expression data. One plausible explanation is bulk aggregation: each bulk profile is obtained by summing or averaging expression across many cells, and under suitable regularity conditions, the multivariate central limit theorem implies that the aggregated expression vector is better approximated by a multivariate Gaussian distribution. Such aggregation can smooth irregularities and attenuate higher-order nonlinear structure in estimated gene-to-gene functions, making them more amenable to affine approximation. Consequently, bulk data tend to exhibit higher apparent functional linearity, with GAM smooth terms having effective degrees of freedom closer to one and estimated pairwise functions deviating less from their best affine approximations.
\section{Limitations and Discussion}\label{limiation}
\subsection{Discussion}
Bulk gene expression data provide an important and cost-effective way to study gene regulation, but they are obtained by aggregating RNA across cells. This aggregation removes cell-level variation and may alter the functional and statistical relationships among genes. In this work, we formalize when causal relations can be recovered from such aggregated data using functional-form consistency and conditional-independence consistency.

Our theoretical results show that recoverability under aggregation requires strong compatibility between the aggregation operator and the underlying causal mechanisms. In particular, under common linear aggregations such as sum or mean, the functional form is preserved only when the structural functions are affine. This indicates that causal relations inferred from bulk-level distributions should not be directly interpreted as cell-level causal relations unless these assumptions are justified.

The empirical analyses further support this caution. Across the examined bulk and single-cell datasets, estimated pairwise regulatory functions often deviate from linearity, suggesting limited empirical support for the affine assumptions required by the theory. Therefore, while bulk data remain useful for studying aggregated regulatory patterns, causal discovery from bulk expression data requires careful interpretation and strong additional assumptions.

\subsection{Limitations}
This work has two main limitations. First, although our theory characterizes when causal relations are recoverable under aggregation, it does not directly solve the practical problem of recovering causal relations from bulk gene expression data. In practice, reliable recovery may require additional information, such as perturbation data, time-series measurements, prior regulatory knowledge, or single-cell reference data. How to incorporate such information into causal discovery from bulk data remains an open implementation-level problem.

Second, the theoretical results in this work mainly focus on linear aggregation. While we show that recoverability under linear aggregation requires strong restrictions on the structural equations, the corresponding conditions for nonlinear aggregation remain unclear. In particular, for nonlinear aggregation operators, what forms of structural equations can preserve recoverability needs further theoretical investigation.


\end{document}